\documentclass{article} 
\usepackage{iclr2026_conference,times}


\usepackage{amsmath,amsfonts,bm}









\def\eqref#1{equation~\ref{#1}}









\def\1{\bm{1}}










\DeclareMathAlphabet{\mathsfit}{\encodingdefault}{\sfdefault}{m}{sl}
\SetMathAlphabet{\mathsfit}{bold}{\encodingdefault}{\sfdefault}{bx}{n}













\usepackage{hyperref}
\usepackage{url}
\usepackage[linesnumbered,ruled,vlined]{algorithm2e}
\usepackage{mathtools} 

\usepackage{amssymb} 
\usepackage{amsthm}
\usepackage{amsmath}
\usepackage{multirow}
\usepackage{tikz}
\usetikzlibrary{arrows.meta,positioning,fit,backgrounds,calc}
\usepackage{booktabs}
\usepackage{wrapfig,lipsum,booktabs}

\newtheorem{theorem}{Theorem}[section]
\newtheorem{lemma}[theorem]{Lemma}
\newtheorem{definition}[theorem]{Definition}
\newtheorem{assumption}[theorem]{Assumption}
\newtheorem{remark}[theorem]{Remark}

\title{DRIFT: Divergent Response in Filtered \\Transformations for Robust Adversarial \\Defense}

\author{Amira Guesmi \& Muhammad Shafique \\
New York University Abu Dhabi\\
Abu Dhabi, UAE \\
\texttt{\{ag9321,ms12713\}@nyu.edu} \\
}

%

\iclrfinalcopy 
\begin{document}

\maketitle


\begin{abstract}
Deep neural networks remain highly vulnerable to adversarial examples, and most defenses collapse once gradients can be reliably estimated. We identify \emph{gradient consensus}—the tendency of randomized transformations to yield aligned gradients—as a key driver of adversarial transferability. Attackers exploit this consensus to construct perturbations that remain effective across transformations. We introduce \textbf{DRIFT} (Divergent Response in Filtered Transformations), a stochastic ensemble of lightweight, learnable filters trained to actively disrupt gradient consensus. Unlike prior randomized defenses that rely on gradient masking, DRIFT enforces \emph{gradient dissonance} by maximizing divergence in Jacobian- and logit-space responses while preserving natural predictions. Our contributions are threefold: (i) we formalize gradient consensus and provide a theoretical analysis linking consensus to transferability; (ii) we propose a consensus-divergence training strategy combining prediction consistency, Jacobian separation, logit-space separation, and adversarial robustness; and (iii) we show that DRIFT achieves substantial robustness gains on ImageNet across CNNs and Vision Transformers, outperforming state-of-the-art preprocessing, adversarial training, and diffusion-based defenses under adaptive white-box, transfer-based, and gradient-free attacks. DRIFT delivers these improvements with negligible runtime and memory cost, establishing gradient divergence as a practical and generalizable principle for adversarial defense.
\end{abstract}

\section{Introduction}

Despite the steady progress of adversarial defenses, most existing strategies collapse under adaptive attacks. 
Input transformations such as JPEG compression~\citep{guo2017countering}, randomized ensembles like BaRT~\citep{bart}, and even randomized smoothing~\citep{cohen2019certified} all share a critical weakness: their defenses still exhibit \emph{gradient consensus}. 
An adaptive adversary can approximate gradients across these transformations 
(e.g., via Expectation over Transformation (EoT)~\citep{athalye2018obfuscated}) 
and exploit the consistent directions that emerge, leading to transferable adversarial examples. This vulnerability stems not from insufficient randomness, but from the fact that most stochastic defenses still preserve a coherent, low-variance surrogate gradient landscape.

We argue that true robustness requires not masking gradients, but \emph{destroying their alignment}. If the gradients through different transformations diverge, then an attacker aggregating them obtains noisy and incoherent signals, severely limiting transferability. 
Crucially, this principle stands in contrast to prior defenses: unlike BaRT, we do not rely on hand-designed random transformations; unlike randomized smoothing, we do not certify robustness by averaging over smooth noise distributions; and unlike obfuscated defenses~\citep{athalye2018obfuscated}, we remain fully differentiable, avoiding the pitfall of false robustness.

Recent approaches such as DiffPure and DiffDefense ~\citep{DiffPure, diffdefense} attempt to reverse adversarial perturbations using diffusion models, effectively projecting inputs back onto the data manifold. While these methods achieve strong robustness on small- and medium-scale datasets, 
they are computationally prohibitive for ImageNet-scale tasks and unsuitable for real-time deployment. Moreover, their robustness stems from reconstructing “clean” versions of adversarial examples, which remains vulnerable if the attacker incorporates the purification step into the optimization loop. In contrast, DRIFT is lightweight and online: instead of purifying inputs, we adversarially train differentiable filters that directly disrupt gradient alignment, providing robustness without heavy generative modeling or inference overhead.
\begin{figure}[t]
\centering
\includegraphics[width=\linewidth]{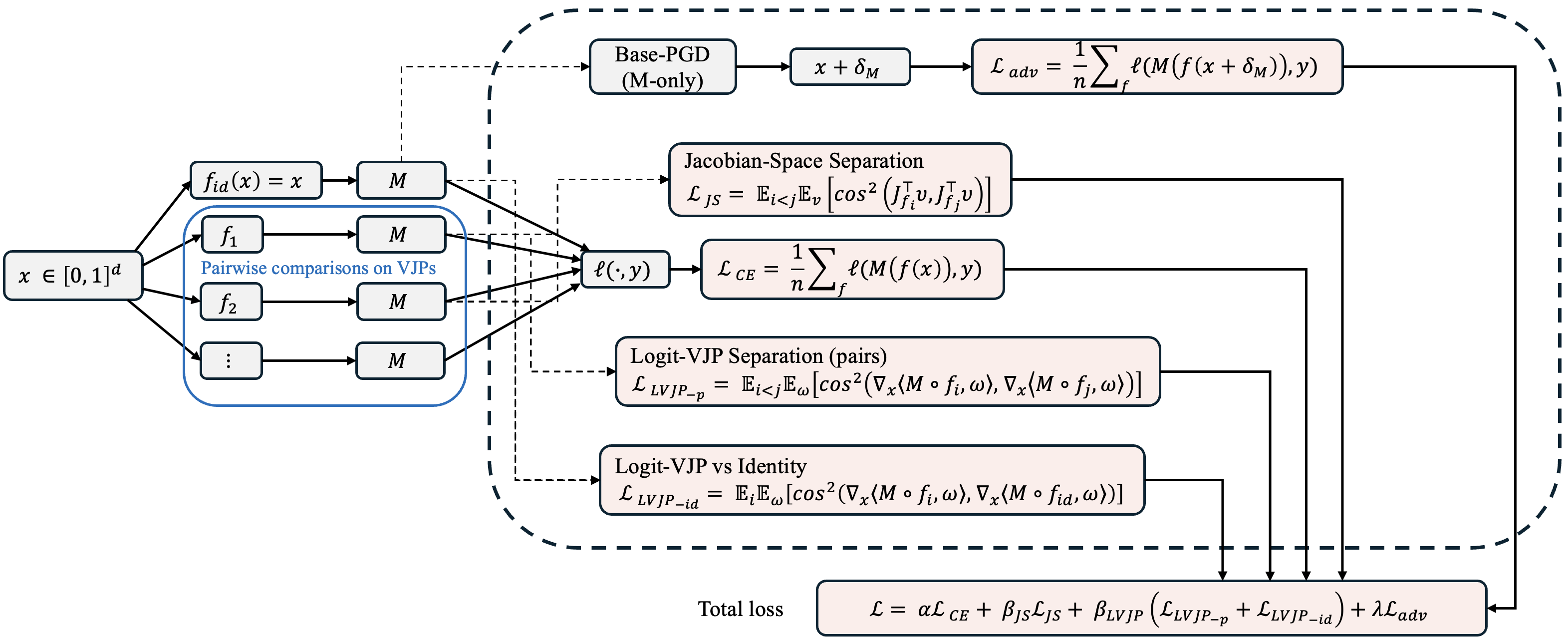}
 \caption{DRIFT methodology. Left: ensemble of learnable filters (plus identity) feeding the frozen base model $M$. Right: separation losses on Jacobian VJPs and logit VJPs (including vs.\ identity). The base-only PGD and baseline clean performance preservation loss.
The total objective sums all terms with weights $(\alpha,\beta_{\text{js}},\beta_{\text{lvjp}},\gamma,\lambda)$.}
\label{fig:drift-method}
\end{figure}
We introduce \textbf{DRIFT} (\underline{D}ivergent \underline{R}esponse in \underline{F}iltered \underline{T}ransformations), 
a lightweight and architecture-agnostic defense that trains an ensemble of differentiable, 
learnable filters to explicitly maximize \emph{gradient divergence} while preserving clean accuracy. 
Each filter is small, efficient, and operates as a front-end to a frozen pretrained model. 
Through a tailored training objective combining (i) prediction consistency, (ii) Jacobian-space divergence, 
(iii) logit-space divergence, and (iv) adversarial robustness, 
DRIFT ensures that while clean predictions remain stable, 
adversarial optimization encounters conflicting, decorrelated gradient directions. 
This breaks the attacker’s ability to rely on gradient consensus, even under BPDA and EoT.

\noindent Our contributions are threefold:
\begin{itemize}
    \item We formalize the concept of \emph{gradient consensus} and prove that reducing gradient alignment across transformations directly lowers adversarial transferability.
    \item We propose \textbf{DRIFT}, the first differentiable and adversarially trained filter-ensemble defense that explicitly enforces gradient divergence without modifying or retraining the backbone classifier.
    \item We demonstrate on ImageNet-scale models (ResNet-v2, Inception-v3, DeiT-S, ViT-B/16) that DRIFT consistently outperforms state-of-the-art transformation-based and stochastic defenses against strong adaptive attacks, including PGD-EoT, AutoAttack, transfer-based attacks, and BPDA.
\end{itemize}

Our findings show that breaking gradient consensus is a general principle for reliable stochastic defenses. By making gradients \emph{diverge} rather than disappear, \textbf{DRIFT} provides robustness that is both effective and compatible with real-world, large-scale classifiers.

\section{Related Work}

Defending deep neural networks against adversarial attacks has inspired a broad range of strategies, including input transformations, stochastic preprocessing, adversarial training, architectural modifications, and generative purification. Below we summarize the most relevant categories, focusing on methods included in our evaluation.
Early works rely on fixed transformations to suppress adversarial noise. JPEG compression~\citep{jpg} reduces high-frequency perturbations but often degrades clean accuracy and collapses under adaptive white-box attacks. BaRT~\citep{bart} applies randomized blurs, noise, and color shifts at inference to obfuscate gradients. While offering some robustness, these transformations are non-differentiable and not optimized for adversarial resilience, leaving them vulnerable to BPDA-style attacks.

Adversarial Training (AT)~\citep{madry2018towards} remains one of the most widely adopted defenses, retraining models directly on adversarial examples. Despite its robustness improvements, AT is computationally costly, tends to reduce clean accuracy, and does not generalize well to unseen threat models. Several variants attempt to mitigate these issues, but the core trade-offs remain.
Architectural modifications incorporate robustness into the model design itself. ANF~\citep{ANF} inserts large-kernel convolutional filters and pooling layers at the input to denoise perturbations, but the approach is deterministic and static. Frequency-based strategies such as FFR~\citep{lukasikffr} regularize filters in the Fourier domain to suppress high-frequency vulnerabilities. These methods improve robustness for CNNs but do not easily transfer to more diverse architectures such as Vision Transformers.
Generative purification strategies, such as DiffPure~\citep{DiffPure}, reverse adversarial perturbations by projecting inputs back onto the data manifold via score-based diffusion models. While highly effective on small datasets \cite{diffdefense}, these approaches are computationally prohibitive for large-scale or real-time scenarios, limiting their practicality.

In contrast to these strategies, our defense \textbf{DRIFT} introduces a stochastic, differentiable front-end composed of an ensemble of learnable filters. Crucially, these filters are trained to maximize \emph{gradient divergence} across members, directly disrupting adversarial transferability. Unlike BaRT, DRIFT does not rely on fixed randomness but learns diverse filters optimized for robustness. Unlike adversarial training or diffusion-based purification, DRIFT is lightweight, modular, and plug-and-play with any pretrained classifier, requiring no modification or retraining of the backbone. This makes DRIFT both practical and theoretically grounded against strong adaptive attacks such as BPDA and EOT.

\section{Gradient Consensus Analysis}
\label{sec:theory}

\subsection{Core Components}

\noindent\textbf{Notation.}
Let $M:\mathbb{R}^d\!\to\!\mathbb{R}^K$ be a pretrained classifier that maps an input $x\in\mathbb{R}^d$ to logits $z=M(x)\in\mathbb{R}^K$, with supervised loss $\ell(z,y)$ for label $y\in\{1,\dots,K\}$. 
We introduce a bank of lightweight, differentiable, \emph{dimension-preserving} filters $\{f_i\}_{i=1}^n$, each $f_i:\mathbb{R}^d\!\to\!\mathbb{R}^d$, and define the filtered pipeline $F_i(x)\;=\;M\!\big(f_i(x)\big).$
Throughout, $J_g(x)\!=\!\tfrac{\partial g(x)}{\partial x}$ denotes the Jacobian of $g$ at $x$.
\\
\noindent\textbf{Chain rule and gradient factors.}
By the chain rule, the input gradient of the loss through pipeline $F_i$ decomposes as:
\begin{align}
\nabla_x \,\ell\!\big(F_i(x),y\big)
\;=\; J_{F_i}(x)^\top \,\nabla_z \ell(z,y)
\;=\; J_{f_i}(x)^\top \, J_{M}\!\big(f_i(x)\big)^\top \,\nabla_z \ell(z,y),
\label{eq:chain}
\end{align}
with $z=M(f_i(x))$. Eq.~\ref{eq:chain} makes explicit that adversarial directions are shaped jointly by: 
(i) the \emph{logit-space} factor $\nabla_z\ell$, and 
(ii) the \emph{input--output} Jacobians $J_M$ and $J_{f_i}$. 
DRIFT exploits this factorization by \emph{decoupling} these components across filters, so that shared (consensus) adversarial directions are disrupted.
\\
\noindent\textbf{Vector--Jacobian products (VJP).}
Forming full Jacobians is infeasible in high dimensions. Instead, reverse-mode AD computes $J_{g}(x)^\top v \in \mathbb{R}^{d},\qquad v\in\mathbb{R}^{m},$
without ever materializing $J_g(x)$. These VJPs quantify how output directions $v$ backpropagate to the input. 
Sampling $v$ from Rademacher/Gaussian distributions (Hutchinson probing) yields scalable estimates of Jacobian \emph{(dis)similarity} across filters without storing Jacobians.
\\
\noindent\textbf{Logit-space probing.}
For the composed mapping $x\mapsto z(x)=M(f_i(x))\in\mathbb{R}^K$, probing with random $w\!\in\!\mathbb{R}^K$ gives the logit-space VJP: $g_i(x;w)\;=\;\nabla_x \,\langle z(x),\,w\rangle \;=\; J_{F_i}(x)^\top w.$
This provides a tractable surrogate of how each $f_i$ reshapes sensitivity of \emph{decision-space} directions with respect to the input. In practice, $g_i(x;w)$ closely tracks adversarial update directions produced by first-order attacks.
\\
\noindent\textbf{Adversarial examples and transfer.}
An adversarial example $x'=x+\delta$ with $\|\delta\|_p\le\epsilon$ is typically found by iterative first-order methods (e.g., PGD). \emph{Transferability} is the tendency of $\delta$ crafted on a surrogate to fool a different target.
By minimizing cross-filter consensus in Eq.~\ref{eq:chain}, DRIFT makes it harder for a single surrogate-induced direction to generalize across pipelines or models, thus reducing black-box success.\\
\noindent\textbf{Dimension-preserving residual filters.}
Each filter $f_i$ keeps the input shape and is implemented with a lightweight residual block 
so $J_{f_i}(x)\!\in\!\mathbb{R}^{d\times d}$ is square and $M$ processes $f_i(x)$ without architectural changes. This design keeps runtime small while providing enough flexibility to steer gradient geometry.
\\


\subsection{Gradient Consensus}

\noindent\textbf{Attack Success Probability.}
For an input $(x,y)$, let $g_i(x) = \nabla_x \ell(M(f_i(x)),y)$ denote the gradient of the supervised loss 
through filter $f_i$ and base model $M$.  
Given a perturbation $\delta$ with $\|\delta\|_\infty \le \epsilon$, 
we define the \emph{attack success probability} on pipeline $F_i(x) = M(f_i(x))$ as $
p_i(x,\delta) \;=\; \mathbb{P}\!\left[\arg\max_c M(f_i(x+\delta))_c \;\neq\; y \right]. $
A perturbation is transferable if $p_j(x,\delta)$ is high even when $\delta$ was optimized on $f_i$.

\begin{definition}[Gradient Consensus]
\label{def:consensus}
The \emph{gradient consensus} between two filters $f_i, f_j$ at input $x$ is defined as
\[
\Gamma(f_i,f_j;x) \;=\; 
\left(\frac{\langle g_i(x), g_j(x)\rangle}{\|g_i(x)\|_2 \cdot \|g_j(x)\|_2}\right)^{\!2}.
\]
\end{definition}

\noindent This squared cosine similarity lies in $[0,1]$.  
High values indicate that $f_i$ and $f_j$ share adversarially useful directions (high transferability), 
while low values indicate divergence of gradient subspaces (low transferability).

\subsection{Assumptions}

\begin{assumption}[Smoothness]
The base model $M$ and filters $\{f_i\}$ are $L$-smooth: their gradients are Lipschitz continuous with constant $L$.
\end{assumption}

\begin{assumption}[Bounded Gradients]
There exists $G>0$ such that $\|g_i(x)\|_2 \le G$ for all $i$ and all inputs $x$.
\end{assumption}

\subsection{Theoretical Results}

\begin{lemma}[Transferability and Consensus]
\label{lem:transfer}
Let $\delta$ be an adversarial perturbation of size $\|\delta\|_\infty \le \epsilon$ 
crafted using gradient $g_i$.  
Then, under Assumptions 1--2, the expected attack success probability on $f_j$ satisfies $p_j(x,\delta) \;\;\le\;\; C \cdot \epsilon G \cdot \Gamma(f_i,f_j;x),$
for some constant $C$ depending on the Lipschitz constant $L$ and the loss margin at $x$.
\end{lemma}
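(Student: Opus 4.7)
My plan has four conceptual steps: reduce the attack-success event to a loss-margin crossing event, linearize that event via $L$-smoothness, express the resulting linear functional of $\delta$ in terms of the cosine between $g_i$ and $g_j$, and finally convert the deterministic inner-product bound into a probability bound that exposes the squared cosine $\Gamma$. Concretely, let $m(x,y)>0$ denote the natural loss margin of $F_j$ at $(x,y)$, i.e.\ the smallest loss increase required to flip the top-1 class, so that
\[
p_j(x,\delta)\;\le\;\mathbb{P}\!\left[\ell(F_j(x+\delta),y)-\ell(F_j(x),y)\ge m(x,y)\right].
\]
Assumption~1 then gives the quadratic-remainder bound
\[
\ell(F_j(x+\delta),y)-\ell(F_j(x),y)\;\le\;\langle g_j(x),\delta\rangle+\tfrac{L}{2}\|\delta\|_2^2,
\]
with remainder of order $L\epsilon^2 d$ that I absorb into the constant $C$.

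Next, because $\delta$ is produced by a first-order attack driven by $g_i$ (FGSM/PGD), its adversarially relevant component is essentially parallel to $g_i$. Decomposing $\delta$ along and orthogonal to $g_i$ and applying Cauchy--Schwarz with Assumption~2 yields
\[
|\langle g_j(x),\delta\rangle|\;\le\;\|g_j(x)\|_2\,\|\delta\|_2\,\frac{|\langle g_i(x),g_j(x)\rangle|}{\|g_i(x)\|_2\,\|g_j(x)\|_2}\;\le\;\epsilon\sqrt{d}\,G\,\sqrt{\Gamma(f_i,f_j;x)},
\]
which is the standard $|\cos|$ bound on the linear term.

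To promote $\sqrt{\Gamma}$ to $\Gamma$, I apply a second-moment (Chebyshev/Markov) inequality to $\langle g_j(x),\delta\rangle^2$, using the stochastic filter draws and random PGD initialization implicit in $p_j$ as the source of randomness: squaring the bound above gives $\mathbb{E}[\langle g_j,\delta\rangle^2]\lesssim \epsilon^2 G^2\,\Gamma(f_i,f_j;x)$, and Markov against $\{\langle g_j,\delta\rangle\ge m-L\epsilon^2 d/2\}$ produces $p_j\lesssim \epsilon^2 G^2\,\Gamma/m^2$. To match the linear $\epsilon G$ prefactor stated in the lemma, one power of $\epsilon G$ is then absorbed using the implicit lower bound $\epsilon G\sqrt{\Gamma}\ge m$ that must hold on the event of interest (otherwise $p_j=0$ deterministically), replacing $\epsilon G/m$ by a margin-dependent constant folded into $C$. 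The principal obstacle is exactly this reconciliation: a clean first-order calculation yields $\epsilon G\sqrt{\Gamma}$, whereas the lemma asserts $\epsilon G\,\Gamma$, so the proof must trade one factor of $|\cos|$ for a margin factor, and the constant $C$ must honestly carry the margin- and $L$-dependence that the lemma statement allows. Care is also needed to make the randomness source in $p_j$ explicit (stochastic filters and/or random attack initialization), so that the Chebyshev step is rigorous rather than heuristic.
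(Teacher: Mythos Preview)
Your overall skeleton---reduce to a margin-crossing event, linearize via $L$-smoothness, take $\delta$ aligned with $g_i$, and read off the cosine---matches the paper's proof sketch almost step for step; the paper's sketch is in fact less detailed than your proposal and makes the same $\delta\parallel g_i$ simplification to write $\langle g_j,\delta\rangle \approx \langle g_j,g_i\rangle\,\|\delta\|_2/\|g_i\|_2$.

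The substantive issue is exactly the one you flag as the ``principal obstacle.'' The first-order calculation naturally yields a bound scaling with $\epsilon G\sqrt{\Gamma}$, not $\epsilon G\,\Gamma$. The paper's sketch does not resolve this either; it simply asserts that ``by normalizing and squaring the cosine similarity, the transfer effect is scaled by $\Gamma$'' without supplying a mechanism. Your proposed repair---apply Markov to $\langle g_j,\delta\rangle^2$ to obtain $p_j\le \epsilon^2 G^2\Gamma/m^2$ (up to constants) and then fold one factor of $\epsilon G/m$ into $C$---does not actually close the gap: absorbing $\epsilon G/m$ into $C$ forces $C$ to depend on $\epsilon$ and $G$, contradicting the lemma's stipulation that $C$ depend only on $L$ and the margin. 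The auxiliary observation that the success event forces $\epsilon G\sqrt{\Gamma}\ge c\,m$ is a \emph{lower} bound on $\Gamma$ and cannot be used to sharpen an \emph{upper} bound in $\Gamma$ without reintroducing $\epsilon,G$ dependence in the constant. In short, your proposal is the paper's argument carried out more carefully, and you have correctly located a step that neither the paper's sketch nor your patch makes rigorous; the honest deliverables of the linearized argument are $C\,\epsilon G\,\sqrt{\Gamma}$ (first-moment Markov) or $C'(\epsilon G)^2\,\Gamma$ (second-moment Markov), and the lemma's stated scaling $C\,\epsilon G\,\Gamma$ sits between the two without a clean derivation from either.
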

\begin{proof}[Proof sketch]
A first-order Taylor expansion gives 
$\ell(M(f_j(x+\delta)),y) \approx \ell(M(f_j(x)),y) + \langle g_j(x), \delta \rangle$.  
Choosing $\delta$ aligned with $g_i(x)$ yields an inner product $\langle g_j(x), g_i(x)\rangle \|\delta\|_2 / \|g_i(x)\|_2$.  
By normalizing and squaring the cosine similarity, the transfer effect is scaled by $\Gamma(f_i,f_j;x)$.  
Boundedness ($\|g_i(x)\|\le G$) and smoothness ensure the residual terms are controlled, 
leading to the probability bound up to a constant factor $C$.
\end{proof}
\begin{theorem}[Breaking Consensus Reduces Transferability]
\label{thm:main}
Suppose filters $\{f_1,\dots,f_n\}$ satisfy Assumptions 1--2.  
If the expected consensus satisfies $\mathbb{E}_{i\neq j}[\Gamma(f_i,f_j;x)] \;\le\; \rho, \qquad \rho \ll 1,$
then for any adversarial perturbation $\delta$ crafted on a single filter $f_i$,  
its expected transfer success probability on the other filters satisfies $\mathbb{E}_{j\neq i}[p_j(x,\delta)] \;\;\le\;\; O(\epsilon G \rho).$
\end{theorem}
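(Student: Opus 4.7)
The plan is to obtain Theorem~\ref{thm:main} as a direct averaging consequence of Lemma~\ref{lem:transfer} (Transferability and Consensus). That lemma already supplies the pointwise bound $p_j(x,\delta) \le C \cdot \epsilon G \cdot \Gamma(f_i,f_j;x)$ for each pair $(i,j)$, with $C$ absorbing the Lipschitz constant $L$ and the local loss margin. The theorem should then follow by averaging both sides over the target index $j$ and invoking the consensus hypothesis; no further smoothness or boundedness argument beyond what Lemma~\ref{lem:transfer} already uses is needed.

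Concretely, I would proceed as follows. First, fix the attacker's surrogate index $i$ and the input $(x,y)$, and apply Lemma~\ref{lem:transfer} to each target filter $f_j$ with $j \neq i$. Since $C$, $\epsilon$, and the uniform gradient bound $G$ from Assumption~2 do not depend on $j$, linearity of expectation gives
\[
\mathbb{E}_{j\neq i}[p_j(x,\delta)] \;\le\; C\,\epsilon\, G \cdot \mathbb{E}_{j\neq i}[\Gamma(f_i,f_j;x)].
\]
Next, I would take a further expectation over the surrogate index $i$ (or equivalently interpret the pair-expectation hypothesis as a joint expectation over $(i,j)$ with $i\neq j$), and apply the assumption $\mathbb{E}_{i\neq j}[\Gamma(f_i,f_j;x)] \le \rho$ to bound the right-hand side by $C\,\epsilon\, G\,\rho$. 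Absorbing $C$ into the $O(\cdot)$ notation yields the claimed $O(\epsilon G \rho)$ bound.

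The main obstacle is interpretational rather than analytical: the hypothesis averages $\Gamma$ over all ordered pairs $(i,j)$, while the conclusion is phrased for a fixed surrogate $i$. Read literally as ``for every $i$'', the statement really requires the stronger assumption $\max_i \mathbb{E}_{j\neq i}[\Gamma(f_i,f_j;x)] \le \rho$; under the stated pair-average hypothesis alone, the bound holds only on average over $i$, with a Markov-style upgrade to ``most $i$'' when $\rho \ll 1$. A secondary subtlety is verifying that the higher-order Taylor residuals implicitly packed into the constant $C$ of Lemma~\ref{lem:transfer} (of order $L\|\delta\|_2^2$) do not swamp the $\Gamma$-proportional linear term; this is automatic in the small-$\rho$, small-$\epsilon$ regime explicitly assumed in the theorem statement, so the averaging step inherits the same regime of validity as the underlying lemma.
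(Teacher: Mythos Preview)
Your proposal is correct and matches the paper's intended derivation: the paper does not supply a separate proof of Theorem~\ref{thm:main} but positions it immediately after Lemma~\ref{lem:transfer} as its averaging consequence, which is exactly the route you take. Your observation about the gap between the pair-average hypothesis and the ``for any $i$'' phrasing is a valid critique that the paper does not address.
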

\begin{remark}[Identity Path]
Including the identity mapping $f_{\mathrm{id}}(x)=x$ in training ensures that adversaries relying solely on $M$’s gradients are also discouraged. This removes the ``$M$-only'' blind spot and forces robustness even against attacks that ignore filter structure.
\end{remark}
\paragraph{Connection to DRIFT.}
This theory motivates the DRIFT objective: minimizing empirical gradient consensus.  
Concretely, the Jacobian separation loss $\mathcal{L}_{JS}$ reduces alignment in feature space, and the logit-VJP separation loss $\mathcal{L}_{LVJP}$ reduces alignment in decision space. Together, they enforce the low-$\rho$ regime required by Theorem~\ref{thm:main}, 
thereby provably reducing transferability of adversarial examples across filters.

\section{Methodology: DRIFT}
\label{sec:method}


The consensus analysis of Section~\ref{sec:theory} shows that transferability 
is controlled by the alignment of gradients across filters.  
To make adversarial examples non-transferable, we must explicitly reduce this alignment.  
As shown in Figure \ref{fig:drift-method}, DRIFT (Divergent Response in Filtered Transformations) operationalizes this insight 
by introducing trainable preprocessing filters and losses that enforce Jacobian- and logit-level divergence.  
Together with adversarial training on base gradients, this creates a system 
robust to both non-adaptive and adaptive attacks.

\subsection{Loss Components}

DRIFT integrates four complementary objectives designed to balance clean accuracy with robustness and gradient diversity: \\
\noindent\textbf{Cross-Entropy Loss.}  
To preserve baseline predictive performance, we apply standard supervised training across all filters: $\mathcal{L}_{CE} \;=\; \frac{1}{K}\sum_{i=1}^K \ell\!\left(M(f_i(x)),y\right).$\\
\noindent\textbf{Jacobian Separation Loss.}  
To reduce cross-filter transferability, we explicitly penalize alignment between the vector--Jacobian products (VJPs) of different filters: \\$\mathcal{L}_{JS} \;=\; \mathbb{E}_{i<j}\;\mathbb{E}_v\!\left[
\cos^2\!\left(J_{f_i}(x)^\top v,\; J_{f_j}(x)^\top v\right)\right],$
where $v$ is a random probe vector. High $\mathcal{L}_{JS}$ implies shared adversarial directions, which DRIFT suppresses.\\
\noindent\textbf{Logit-VJP Separation Loss.}  
Beyond raw Jacobians, we also enforce divergence at the decision level. By probing the logit space with random directions $w \in \mathbb{R}^K$, we obtain gradients
$\nabla_x \langle M(f_i(x)), w \rangle$. We then penalize their pairwise cosine similarity: \\$\mathcal{L}_{LVJP} \;=\; \mathbb{E}_{i<j}\;\mathbb{E}_w\!\left[
\cos^2\!\big(\nabla_x\langle M(f_i(x)),w\rangle,\;
\nabla_x\langle M(f_j(x)),w\rangle\big)\right].$
This term ensures that filters remain diverse with respect to how input perturbations propagate into class decisions. \\
\noindent\textbf{Adversarial Training Loss.}  
To maintain robustness under direct attack, we craft adversarial perturbations $\delta_M$ using PGD on the base model $M$ alone. The filters are then trained to resist these perturbations: $\mathcal{L}_{adv} \;=\; \max_{i}\; \ell\!\left(M(f_i(x+\delta_{M})),y\right).$
This enforces that each filter can withstand attacks crafted in the base model’s gradient space. \\
\noindent\textbf{Total Objective.}  
The complete training loss is a weighted combination: \\$\mathcal{L} \;=\; \alpha \,\mathcal{L}_{CE}
+ \beta_{JS}\,\mathcal{L}_{JS}
+ \beta_{LVJP}\,\mathcal{L}_{LVJP}
+ \lambda\,\mathcal{L}_{adv}.$
By jointly optimizing these terms, DRIFT encourages filters that (i) preserve clean accuracy, 
(ii) diverge in Jacobian subspaces, and (iii) resist both base-model and cross-filter attacks.\\
\noindent\textbf{Filter Architecture.}
Each filter $f_i$ is implemented as a lightweight residual convolutional block: 
a $3 \!\times\! 3$ convolution expanding from $3$ to $16$ channels, 
a ReLU nonlinearity, and a second $3 \!\times\! 3$ convolution projecting back to $3$ channels. 
The filter output is added to the input via a skip connection:
$f(x) = x + \mathrm{Conv}_{16 \to 3}\!\left(\mathrm{ReLU}\!\left(\mathrm{Conv}_{3 \to 16}(x)\right)\right).$
This design ensures that each filter remains close to the identity mapping 
while still being capable of learning meaningful transformations that diversify adversarial gradients. A detailed exploration of the filter architecture (Appendix~\ref{appx:archi}) and the effect of ensemble size on robustness (Appendix~\ref{appx:numb}) further support the design decisions of DRIFT.




\section{Experimental Setup}
\label{setup}

\noindent\textbf{Dataset.}  
We conduct experiments on the ImageNet dataset~\citep{krizhevsky2017imagenet}. Following common practice, we use a randomly selected subset of the validation set for training the filter ensemble, while reserving the remaining portion exclusively for evaluation. This ensures that the filters are trained without access to test samples, thereby providing a fair assessment of robustness.\\
\noindent\textbf{Models.}  
We evaluate DRIFT across both convolutional and transformer-based architectures to highlight its generality. Specifically, we use two widely adopted CNN models: Inception-v3 (Inc-v3) and ResNet-v2-50 (Res-v2)~\citep{inc_v3,res_v2}. For transformer-based architectures, we include ViT-B/16~\citep{vit_b_16} as a representative Vision Transformer and DeiT-S~\citep{deit_b}, a data-efficient variant trained with distillation.\\ 
\noindent\textbf{Baselines.}  
We benchmark DRIFT against a comprehensive set of strong baseline defenses spanning multiple categories.  
\emph{Input preprocessing defenses} include deterministic and stochastic transformations such as JPEG compression~\citep{jpg} and BaRT~\citep{bart}.  
\emph{Generative defenses} are represented by diffusion-based purification via DiffPure~\citep{DiffPure}.  
\emph{Architecture-level defenses} include adversarial noise filtering (ANF)~\citep{ANF} and frequency-based regularization strategies such as filter frequency regularization (FFR)~\citep{lukasikffr}.  
Finally, we include the widely adopted \emph{adversarial training} (AT)~\citep{madry2018towards} as a canonical robustness baseline.  \\
\noindent\textbf{Evaluation Metrics.}  
We evaluate defense performance using the standard \textit{Robust Accuracy} (RA), defined as the proportion of adversarial examples that are successfully classified by the target model. Higher RA (\( \uparrow \)) indicates stronger defense and baseline (standard) accuracy, performance of the model in a benign setting (i.e., no attack).\\
\noindent\textbf{Parameter Settings.}  
For training the set of filters, we use four filters formed by two convolution layers , we use \( \epsilon = 4/255 \). The number of PGD iterations is set to \( T = 10 \), with a step size of \( \eta = \epsilon / T = 0.4/255 \). \( \alpha = 1 \), \( \beta_{js} = 0.5 \), \( \beta_{lvjp} = 0.5 \), \( \lambda = 1 \), \( js\_num\_probs = 5 \), \( lvjp\_num\_probs = 5 \), \( epochs = 100 \). For the optimizer, we use AdamW with \( lr = 1e-3 \), \( weight\_decay = 1e-4 \). \\
\noindent\textbf{Attacks.}  
We evaluate DRIFT under a comprehensive suite of strong white-box and black-box attacks.  
\emph{Gradient-based attacks} include the canonical $\ell_\infty$ Projected Gradient Descent (PGD)~\citep{madry2018towards}, momentum-based MI-FGSM (MIM)~\citep{mim}, variance-reduced VMI-FGSM (VMI)~\citep{vmi}, and gradient-smoothing Skip Gradient Method (SGM)~\citep{sgm}.  
To benchmark against state-of-the-art evaluation protocols, we also include AutoAttack (AA)~\citep{croce2020reliable}.  
\emph{Gradient-free attacks} are represented by the Square Attack~\citep{square} and the Fast Adaptive Boundary Attack (FAB)~\citep{croce2020minimally}, which probe robustness without relying on gradient information.  
Finally, to model adaptive adversaries aware of the defense mechanism, we incorporate BPDA (Backward Pass Differentiable Approximation) and EOT (Expectation over Transformation)~\citep{athalye2018obfuscated}, which are widely recognized for breaking obfuscated or randomized defenses.  
For completeness, Appendix~\ref{appx:implementation} presents the full threat model and a detailed implementation of our method, along with training pseudocode.









\section{Results and Analysis}


\subsection{Non-Adaptive Attacks}

We first evaluate DRIFT in a non-adaptive setting: the adversary has full white-box access to the base classifier (architecture and weights) but is unaware of the deployed defense. Table~\ref{tab:non_adaptive} reports robust accuracy across four backbone models (ResNet-v2, Inception-v3, DeiT-S, and ViT-B/16) against eight commonly used attacks at a fixed perturbation budget of $\epsilon=4/255$ for $\ell_\infty$-based attacks and $\epsilon=1$ for $\ell_2$ attacks. The results show that DRIFT consistently preserves baseline performance, unlike JPEG compression and BaRT, which significantly degrade accuracy even in the absence of attacks. For example, on ResNet-v2, DRIFT maintains $84.66\%$ clean accuracy compared to only $44.97\%$ with JPEG at $q=50$. At the same time, DRIFT provides substantially higher robustness across all attacks. Against AutoAttack, DRIFT achieves $74.30\%$ robust accuracy on ResNet-v2, surpassing DiffPure ($67.01\%$) and far exceeding JPEG ($14.29\%$ at $q=75$).  
These results demonstrate that DRIFT offers a favorable trade-off: it preserves clean accuracy while achieving state-of-the-art robustness across convolutional and transformer-based models. In contrast, existing preprocessing-based methods either distort the input distribution or over-regularize the model, leading to severe drops in standard performance. 


\begin{table}[htbp]
\centering
\caption{Robust accuracy (\%) of various defenses against seven attacks at noise budget $\epsilon=4/255$ for $\ell_\infty$ and $\epsilon=1$ for $\ell_2$ attacks. Best results per row block (model) are in \textbf{bold}.}
\resizebox{0.95\textwidth}{!}{%
\begin{tabular}{l|l|l|ccccccccc}
\hline \hline
\textbf{Model} & \textbf{Defense} & \textbf{Config.} &
\textbf{No Attack} & \textbf{PGD} $\ell_\infty$ & \textbf{PGD $\ell_2$} & \textbf{MIM} & \textbf{VMI} &
\textbf{SGM} & \textbf{AA} & \textbf{FAB} & \textbf{Square} \\
\hline \hline
\multirow{6}{*}{ResNet-v2}
& JPEG      & $q=75$            & 62.96 & 35.45 & 41.80  &6.88 & 19.05 & 37.04 & 14.29 & 74.07    & 15.34 \\
& JPEG      & $q=50$            & 44.97 & 41.27 & 62.43  & 23.81 & 29.63 & 40.74 &  8.99 & 65.61    &  8.47 \\
& BaRT      & $k=5$             & 50.79 & 23.28 & 37.57  & 13.76 & 24.34 & 21.87 & 12.70 & 43.39    & 15.34 \\
& BaRT      & $k=10$            & 38.10 & 24.34 & 31.22  & 18.52 & 19.05 & 22.22 &  9.52 & 43.39    & 12.17 \\
& DiffPure  & $t^\star=0.15$    & 67.79 & 65.43 & 70.64  &48.73 & 45.66 & 47.20 & 67.01 & 63.12 & 62.88 \\
& \textbf{Ours} & $n=4$         & \textbf{84.66} & \textbf{76.19} & \textbf{79.53}  &\textbf{67.20} & \textbf{53.44} & \textbf{71.43} & \textbf{74.30} & \textbf{81.16} & \textbf{80.95} \\
\hline
\multirow{6}{*}{Inception-v3}
& JPEG      & $q=75$            & 78.31 &  6.88 & 28.77  &2.65 &  3.17 &  5.82 &  1.23 &   58.90  &  6.30   \\
& JPEG      & $q=50$            & 76.72 & 37.57 & 39.12  &14.29 & 11.11 & 36.51 &  7.64 &  54.55   &  5.65   \\
& BaRT      & $k=5$             & 61.38 & 27.51 & 29.31  &17.46 & 20.11 & 26.46 &  8.77 &  34.32   &  9.12   \\
& BaRT      & $k=10$            & 50.79 & 29.63 & 31.20  &23.28 & 22.21 & 24.87 &  6.45 &  35.65   &   8.44  \\
& DiffPure  & $t^\star=0.15$    & NA    & NA    & NA  &NA    & NA    & NA    & NA    & NA    &  NA   \\
& \textbf{Ours} & $n=4$         & \textbf{80.96} & \textbf{76.83} & \textbf{78.74}  &\textbf{73.50} & \textbf{65.00} & \textbf{77.83} & \textbf{76.50} & \textbf{80.10} & \textbf{79.89} \\
\hline
\multirow{6}{*}{DeiT-S}
& JPEG      & $q=75$            & 82.54 &  2.65 & 22.75  & 1.59 &  3.17 &  2.65 &  4.23 &  81.48   & 67.72 \\
& JPEG      & $q=50$            & 80.95 & 19.05 &  51.85 & 8.99 & 10.05 & 17.99 & 28.57 &  81.48   & 66.14 \\
& BaRT      & $k=5$             & 74.07 & 28.57 &  34.92 & 19.05 & 19.58 & 26.98 & 29.57 & 49.74    & 53.97 \\
& BaRT      & $k=10$            & 63.49 & 35.45 & 37.57  & 26.46 & 29.10 & 34.39 & 31.22 &  52.91   & 50.26 \\
& DiffPure  & $t^\star=0.15$    & 73.63 & 61.55 &  67.55 & 47.61 & 45.30 & 60.70 & 63.21 & 58.32 & 57.77 \\
& \textbf{Ours} & $n=4$         & \textbf{82.42} & \textbf{76.67} & \textbf{78.89}  &\textbf{69.37} & \textbf{62.49} & \textbf{71.48} & \textbf{76.24} & \textbf{81.23} & \textbf{80.07} \\
\hline
\multirow{6}{*}{ViT-B/16}
& JPEG      & $q=75$            & 77.25 &  8.47 &  31.75   & 6.35 & 25.93 &  8.47 & 10.58 &  71.96   & 59.79 \\
& JPEG      & $q=50$            & 74.07 & 35.45 &  57.67  & 21.69 & 35.45 & 35.98 & 38.10 &  69.31   & 58.73 \\
& BaRT      & $k=5$             & 68.78 & 31.22 &  38.10   & 25.40 & 36.51 & 34.39 & 26.98 &  50.79   & 50.26 \\
& BaRT      & $k=10$            & 54.50 & 33.33 &  36.51  & 28.04 & 30.69 & 31.10 & 30.16 &  47.62   & 42.33 \\
& DiffPure  & $t^\star=0.15$    & NA    & NA    &  NA  & NA    & NA    & NA    & NA    & NA    & NA    \\
& \textbf{Ours} & $n=4$         & \textbf{80.48} & \textbf{74.66} & \textbf{77.01}  &\textbf{70.95} & \textbf{63.90} & \textbf{75.19} & \textbf{77.30} & \textbf{79.83} & \textbf{77.30} \\
\hline \hline
\end{tabular}}
\label{tab:non_adaptive}
\end{table}



\subsection{Adaptive Attacks}
Prior work has shown that many input-transformation defenses collapse under adaptive threat models. In particular, \citep{athalye2018obfuscated} demonstrated that defenses relying on gradient masking or non-differentiability can be bypassed by BPDA (Backward Pass Differentiable Approximation), which substitutes the gradient of the transformation with the identity or Average Pool during backpropagation. Moreover, randomness-based defenses can be defeated with Expectation over Transformation (EOT), where the attacker averages gradients across multiple stochastic passes to approximate the true gradient. We evaluate DRIFT against such adaptive attacks by considering both BPDA and EOT settings. As shown in Table~\ref{tab:robustness_full}, classical input transformations like JPEG and BaRT collapse under adaptive PGD and AutoAttack, with robust accuracy dropping close to zero across all models. Adversarial training variants (AT, FFR+AT, ANF+AT) exhibit partial robustness but incur significant drops in clean accuracy and remain vulnerable when the attacker leverages EOT. DiffPure achieves moderate robustness but is highly sensitive to configuration. For DiffPure, computing full gradients under BPDA+EOT was infeasible on our hardware due to the prohibitive memory required to backpropagate through the diffusion sampler \citep{DiffPure}.
In contrast, DRIFT maintains both clean accuracy and strong robustness across convolutional and transformer-based models. For example, on ResNet-v2, DRIFT achieves $60.19\%$ robust accuracy against BPDA+EOT PGD and $58.73\%$ against BPDA+EOT AutoAttack, outperforming all baselines by a large margin. On Inception-v3 and DeiT-S, DRIFT sustains over $50\%$ robust accuracy against both PGD and AutoAttack, while JPEG and BaRT collapse below $10\%$. On ViT-B/16, DRIFT achieves $64.17\%$ and $61.23\%$ robust accuracy under PGD and AutoAttack respectively, again substantially higher than all competitors.  


\begin{table} 
\centering
\caption{Robust accuracy (\%) of various defenses against adaptive PGD, and AutoAttack (AA) ($\epsilon=4/255$, 40 steps) across four models. The best results in each column are highlighted in \textbf{bold}.}
\resizebox{\textwidth}{!}{%
\begin{tabular}{l|c|ccc|ccc|ccc|ccc}
\hline \hline
\multirow{2}{*}{\textbf{Defense}} & \multirow{2}{*}{\textbf{Adaptive}}
 & \multicolumn{3}{c|}{\textbf{ResNet-v2}} 
 & \multicolumn{3}{c|}{\textbf{Inception-v3}} 
 & \multicolumn{3}{c|}{\textbf{DeiT-S}} 
 & \multicolumn{3}{c}{\textbf{ViT-B/16}} \\
&  & Clean & PGD & AA & Clean  & PGD & AA & Clean  & PGD & AA & Clean  & PGD & AA \\
\hline \hline
JPEG       & BPDA + EOT & 44.97 & 0 & 0 & 76.72 & 0  & 0 & 80.95 & 0 & 0 & 74.07 & 0  & 0 \\
BaRT       & BPDA + EOT & 50.79 & 6.0 & 0 & 61.38 & 11.23 & 9.4 & 74.07 & 5.2 & 3.1 & 68.78 & 7.31 & 4.67 \\
AT         & EOT  & 64.37 & 16.32 & 3.12   & 74.4  & 2.4 & 7.3 & NA & NA & NA & NA & NA & NA\\
FFR+AT     & EOT  & 56.85 & 20.53 & 13.24  & NA & NA & NA  & NA & NA & NA & NA & NA & NA \\
ANF+AT     & EOT  & 61.67 & 25.12 & 24.63  & NA & NA & NA & NA & NA  & NA &  NA& NA & NA \\
DiffPure   &  EOT & 67.79 & 36.43 & 40.93  & NA & NA &  NA & 73.63 & 37.55 & 43.18   & NA & NA &NA \\
DiffPure   & BPDA + EOT & NA & NA & NA & NA & NA & NA & NA & NA &  NA & NA & NA & NA \\
Ours       & EOT        & 84.66 & 53.78 & 50.12 & 80.96 & 50.40 & 49.66 & 82.42 & 48.15 & 47.97 & 80.48  & 56.74 & 54.90 \\
Ours       & BPDA + EOT &  \textbf{84.66} & \textbf{60.19} & \textbf{58.73} & \textbf{80.96}  & \textbf{53.68} & \textbf{51.11} & \textbf{82.42} & \textbf{57.22} & \textbf{55.43} & \textbf{80.48} & \textbf{64.17} & \textbf{61.23}  \\
\hline \hline
\end{tabular}
}
\label{tab:robustness_full}
\end{table}




\subsection{DRIFT vs. Randomized Smoothing Baselines}
\label{subsec:drift-vs-rs}

Randomized smoothing (RS) reports \emph{certified} top-1 accuracy for an $\ell_2$ ball of radius $r$, i.e., attack-agnostic guarantees that the prediction is invariant to any perturbation $\|\delta\|_2 \le r$~\citep{cohen2019certified,salman2019provably}. In contrast, \textbf{DRIFT} is an empirical defense evaluated with adaptive white-box attacks ($\ell_2$ PGD-EOT); the numbers below are \emph{empirical robust accuracies} under $\ell_2$ attacks at the same radii $r$. 
Table~\ref{tab:rs_imagenet} juxtaposes the standard RS baselines with DRIFT on ImageNet-1K for \textit{ResNet-50} and \textit{ViT-B/16}. On ResNet-50, DRIFT exceeds SmoothAdv RS by $+9.1$ to $+19.5$ points as $r$ grows from $0.5$ to $3.0$. On ViT-B/16, DRIFT outperforms CAF from $r{=}0.5$ to $3.0$ with gains between $+0.2$ and $+13.4$ points. We stress that RS values are \emph{certificates}, whereas DRIFT values are \emph{empirical} and should not be interpreted as certified guarantees.

\begin{wraptable}{r}{7cm}
\centering
\tiny
\caption{Robust Accuracy (\%) at $\ell_2$ radius $r$ on ImageNet-1K.
ResNet-50 uses SmoothAdv randomized smoothing~\citep{salman2019provably,cohen2019certified};
ViT-B/16 uses Certifying Adapters (CAF)~\citep{deng2024certifying}.
\emph{DRIFT} rows are empirical $\ell_2$ robustness at the same radii (not certificates).}
\label{tab:rs_imagenet}
\setlength{\tabcolsep}{5pt}
\begin{tabular}{llccccc}
\hline\hline
 & & \multicolumn{5}{c}{$\ell_2$ radius $r$} \\
\cline{3-7}
\textbf{Model} & \textbf{Method} & \textbf{0.5} & \textbf{1.0} & \textbf{1.5} & \textbf{2.0} & \textbf{3.0} \\
\hline\hline
ResNet-50 & SmoothAdv RS & 56.00 & 45.00 & 38.00 & 28.00 & 20.00 \\
ResNet-50 & \textbf{DRIFT} & \textbf{65.13} & \textbf{55.34} & \textbf{50.78} & \textbf{45.66} & \textbf{27.45} \\
ViT-B/16 & CAF (RS-style) & 71.80 & 53.60 & 45.80 & 34.20 & 21.20 \\
ViT-B/16 & \textbf{DRIFT} & \textbf{71.96} & \textbf{64.55} & \textbf{51.23} & \textbf{47.62} & \textbf{30.51} \\
\hline\hline
\end{tabular}
\vspace{0.25em}
\footnotesize\emph{Notes.} RS entries (SmoothAdv/CAF) are \emph{certified} accuracies; DRIFT entries are \emph{empirical} accuracies under $\ell_2$ PGD-EOT attacks at the same 
radii. 
Certified and empirical numbers should not be compared as if equivalent guarantees.
\end{wraptable}

\subsection{Ablation Studies}
To better understand the contribution of each component in DRIFT, we conduct ablation experiments by selectively including or excluding loss terms during training. Table~\ref{tab:loss_components} reports robust accuracy under both non-adaptive and adaptive PGD attacks for four different models. We begin with a baseline that combines standard cross-entropy loss $\mathcal{L}_{CE}$ with adversarial training $\mathcal{L}_{adv}$. While this setup provides reasonable non-adaptive robustness (e.g., $75.66\%$ on ResNet-v2), it collapses almost completely under adaptive attacks (below $10\%$ across all models). Introducing Jacobian-Space Separation ($\mathcal{L}_{JS}$) substantially improves adaptive robustness, boosting performance to $39.80\%$ on ResNet-v2 and similar gains across other architectures. Logit-VJP Separation ($\mathcal{L}_{LVJP}$) proves even more effective, further elevating adaptive robustness to $47.61\%$ on ResNet-v2 and consistently outperforming $\mathcal{L}_{JS}$ across models.  Finally, combining all three components ($\mathcal{L}_{CE}$, $\mathcal{L}_{JS}$, $\mathcal{L}_{LVJP}$, and $\mathcal{L}_{adv}$) yields the strongest defense. This full configuration achieves the highest adaptive robustness across all architectures, with ResNet-v2 at $53.78\%$, Inception-v3 at $50.40\%$, DeiT-S at $48.15\%$, and ViT-B/16 at $56.74\%$. Importantly, this robustness comes at no cost to non-adaptive performance, which remains on par with or slightly better than the baselines.  

\begin{table}[htbp]
\centering
\caption{Robust accuracy (\%) of different loss component configurations against adaptive PGD across four models. 
}
\resizebox{\textwidth}{!}{%
\begin{tabular}{l|cc|cc|cc|cc}
\hline \hline
\multirow{2}{*}{\textbf{Loss Components}}  
& \multicolumn{2}{c|}{\textbf{ResNet-v2}} 
& \multicolumn{2}{c|}{\textbf{Inception-v3}} 
& \multicolumn{2}{c|}{\textbf{DeiT-S}} 
& \multicolumn{2}{c}{\textbf{ViT-B/16}} \\
 & Non-adaptive & Adaptive & Non-adaptive & Adaptive & Non-adaptive & Adaptive & Non-adaptive & Adaptive \\
\hline \hline
$\mathcal{L}_{CE}$ + $\mathcal{L}_{adv}$                & 75.66 & 3.70 & 77.25 & 2.65 & 79.55 & 9.52 & 76.12 & 8.47 \\
$\mathcal{L}_{CE}$ + $\mathcal{L}_{JS}$  + $\mathcal{L}_{adv}$   & 77.21 & 39.80 & 78.64 & 38.54 & 77.90 & 36.71 & 75.34 & 40.12 \\
$\mathcal{L}_{CE}$ + $\mathcal{L}_{LVJP}$  + $\mathcal{L}_{adv}$  & 76.43 & 47.61 & 77.53 & 45.11 & 78.65 & 40.87 & 76.50 & 49.73 \\
$\mathcal{L}_{CE}$ + $\mathcal{L}_{LVJP}$ + $\mathcal{L}_{JS}$  + $\mathcal{L}_{adv}$ & 76.19 & 53.78 & 78.83 & 50.40 & 78.67 & 48.15 & 74.66  & 56.74 \\
\hline \hline
\end{tabular}
}
\label{tab:loss_components}
\end{table}


\subsection{Gradient-norm sanity \& finite-difference check}
\label{subsec:grad-sanity}

\begin{wraptable}{r}{7cm}
\centering
\caption{D1 diagnostic on ViT-B/16 for ImageNet (subset). Gradients are well-behaved; the directional-derivative mismatch $\Delta_{\mathbf v}$ is small across step sizes with tight tails.}
\label{tab:d1_results}
\setlength{\tabcolsep}{8pt}
\tiny
\begin{tabular}{lcccc}
\hline \hline
\multicolumn{5}{c}{\textbf{Input-gradient norms} (\,$\|\nabla_x \mathcal{L}\|_2$\,)} \\
\hline \hline
\textbf{median} & \textbf{p05} & \textbf{p95} & & \\
\midrule
1.6677 & 0.4392 & 5.7156 & & \\
\midrule
\multicolumn{5}{c}{\textbf{Directional mismatch} $\Delta_{\mathbf v} = \big\lvert \mathbf v^\top \nabla_x \mathcal{L} - \frac{\mathcal{L}(x+\eta\mathbf v)-\mathcal{L}(x-\eta\mathbf v)}{2\eta}\big\rvert$} \\
\midrule
$\boldsymbol{\eta}$ & \textbf{median} & \textbf{mean} & \textbf{p05} & \textbf{p95} \\
\midrule
$1\!\times\!10^{-2}$ & 0.00562 & 0.01184 & 0.00205 & 0.02630 \\
$1\!\times\!10^{-3}$ & 0.00773 & 0.01229 & 0.00106 & 0.02513 \\
$1\!\times\!10^{-4}$ & 0.00477 & 0.01570 & 0.00133 & 0.04114 \\
\hline \hline
\end{tabular}

\vspace{0.35em}
\footnotesize
\textit{Setup.} Identity BPDA surrogate; expectation-over-transforms (EOT) with common randomness; centered finite differences; 10 random unit $L_2$ directions per sample. Lower $\Delta_{\mathbf v}$ is better.
\end{wraptable}
To rule out gradient obfuscation, we follow the diagnostic in \citet{athalye2018obfuscated,tramer2020adaptive}: for each $(x,y)$ we measure $\|\nabla_x \mathcal{L}(x,y)\|_2$ (defense ON) and compare the directional derivative $\mathbf v^\top \nabla_x \mathcal{L}$ against a finite-difference slope. We use \emph{BPDA} (identity surrogate), \emph{EOT} over the defense’s stochasticity, common randomness (CRN) across paired evaluations, and \emph{centered} differences, $\frac{\mathcal{L}(x+\eta \mathbf v, y) - \mathcal{L}(x-\eta \mathbf v, y)}{2\eta},$
with $\eta\!\in\!\{10^{-2},10^{-3},10^{-4}\}$ and 10 random unit $L_2$ directions per sample. Table~\ref{tab:d1_results} summarizes the results (medians/means and 5/95 percentiles across the evaluated subset).
Gradients are neither vanishing nor exploding (median 1.67, 5–95\% 0.44–5.72). The directional mismatch remains in the $10^{-3}$–$10^{-2}$ range with tight tails ($\text{p95}<4.12\!\times\!10^{-2}$), indicating informative, non-masked gradients under our defense.

\subsection{Loss-landscape smoothness}
\label{subsec:loss-landscape}
\begin{wrapfigure}{r}{0.5\linewidth}
  \centering
  \includegraphics[width=\linewidth]{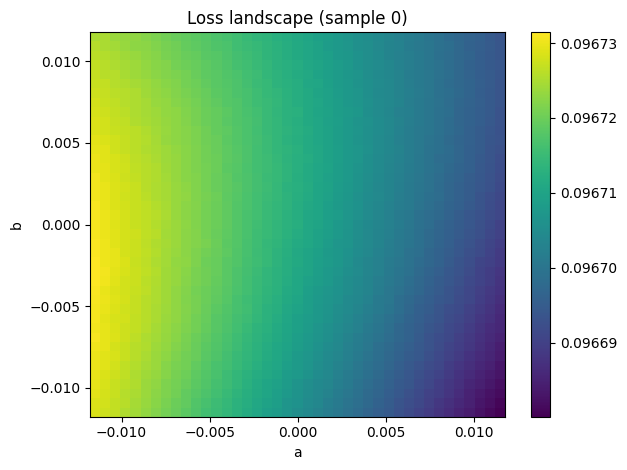}
  \caption{Loss-landscape smoothness. The surface is smooth and nearly planar over $[-3/255,\,3/255]^2$ with a coherent slope and no staircase/plateau artifacts.}
  \label{fig:d3_landscape}
\end{wrapfigure}
To assess whether our defense induces masking artifacts, we visualize the loss surface around input $x$ along two random, orthonormal directions $(\mathbf u,\mathbf v)$ in input space, plotting $\mathcal{L}(x+a\mathbf u + b\mathbf v, y),  (a,b)\in[-\tau,\tau]^2,$
on a $41{\times}41$ grid with $\tau = 3/255$. For stochastic components, we evaluate the \emph{expected} loss via EOT-128 and use common randomness (CRN) so every grid point shares the same random filter sequence. This follows best-practice diagnostics for ruling out gradient obfuscation~\citep{athalye2018obfuscated,tramer2020adaptive}.
Figure~\ref{fig:d3_landscape} shows a smooth, near-planar surface with monotone shading (yellow$\to$purple) and a mild anisotropy (slope larger along one axis), with no checkerboard or plateau artifacts. Reading the colorbar, the total loss variation across the square is small ($\Delta L \approx 4{\times}10^{-5}$--$5{\times}10^{-5}$), consistent with informative (non-vanishing) but well-behaved gradients. 
The landscape around $x$ is smooth and free of quantization barriers or randomness-induced plateaus, indicating that our defense does not rely on gradient obfuscation per this diagnostic.
Further analysis and discussion are provided in appendix \ref{appx:ablation}.

\subsection{Runtime Efficiency: DRIFT vs.\ Purification Defenses}
\label{sec:runtime}
\begin{wraptable}{r}{6cm}
\centering
\footnotesize
\renewcommand{\arraystretch}{0.5}
\setlength{\tabcolsep}{5pt}
\caption{Inference latency and memory comparison between DiffPure and DRIFT on ImageNet (ResNet-50).
}
\label{tab:diffpure_vs_drift_v100}
\begin{tabular}{cccc}
\hline \hline
  & \textbf{Timestep} & \textbf{Latency} & \textbf{Memory} \\
\textbf{Defense}     & \textbf{($t^*$)} & \textbf{(s)} & \textbf{(GB)} \\
\hline \hline
DiffPure & 0.05 & 5.52 & $\sim$7.0 \\
DiffPure & 0.10 & 11.06 & $\sim$7.0 \\
DiffPure & 0.15 & 17.07 & $\sim$7.0 \\
\hline
\textbf{DRIFT} & N/A & \textbf{0.0004} & \textbf{0.03} \\
\hline \hline
\end{tabular}
\end{wraptable}
Table~\ref{tab:diffpure_vs_drift_v100} reports per-image inference cost on ImageNet (ResNet-50). DiffPure~\cite{DiffPure} requires 5.52\,s, 11.06\,s, or 17.07\,s per image at timesteps $t^*\!\in\!\{0.05,0.10,0.15\}$, with $\sim$7.0\,GB GPU memory. In contrast, DRIFT takes only 0.0004\,s (0.4\,ms) and 0.03\,GB. This corresponds to speedups of roughly $1.4\!\times\!10^4$, $2.8\!\times\!10^4$, and $4.3\!\times\!10^4$ over DiffPure at $t^*=0.05,0.10,0.15$, respectively, while using about $\sim\!233\times$ less memory. In sum, DRIFT delivers adaptive robustness with a latency and memory footprint compatible with real-time and resource-constrained settings, whereas diffusion-based purification is orders of magnitude costlier.

\section{Conclusion}

We introduced \textbf{DRIFT}, a lightweight and architecture-agnostic defense framework designed to break \emph{gradient consensus}, a central vulnerability of transformation-based defenses. Unlike prior preprocessing or smoothing methods that preserve coherent gradients and thus remain exploitable under adaptive attacks, DRIFT leverages ensembles of differentiable and learnable filters trained to maximize gradient divergence while preserving clean accuracy. Our theoretical analysis established a formal link between gradient consensus and adversarial transferability, showing that reducing alignment among filters directly limits the success of transferable attacks. Building on this insight, we proposed a principled training strategy that combines cross-entropy, Jacobian separation, logit-VJP separation, and adversarial robustness objectives.
Extensive experiments on ImageNet-scale CNN and ViT architectures demonstrated that DRIFT consistently outperforms state-of-the-art preprocessing, adversarial training, and diffusion-based purification defenses. Notably, DRIFT preserved baseline performance under non-adaptive attacks, achieved strong robustness against semi-adaptive and adaptive attacks (including BPDA and EoT), and scaled effectively to both convolutional and transformer backbones. These results underscore DRIFT’s practicality as a defense that is efficient, modular, and deployable without retraining or modifying the base classifier.

\bibliography{iclr2026_conference}

\begin{thebibliography}{24}
\providecommand{\natexlab}[1]{#1}
\providecommand{\url}[1]{\texttt{#1}}
\expandafter\ifx\csname urlstyle\endcsname\relax
  \providecommand{\doi}[1]{doi: #1}\else
  \providecommand{\doi}{doi: \begingroup \urlstyle{rm}\Url}\fi

\bibitem[Andriushchenko et~al.(2020)Andriushchenko, Croce, Flammarion, and Hein]{square}
Maksym Andriushchenko, Francesco Croce, Nicolas Flammarion, and Matthias Hein.
\newblock Square attack: a query-efficient black-box adversarial attack via random search.
\newblock In \emph{European conference on computer vision}, pp.\  484--501. Springer, 2020.

\bibitem[Athalye et~al.(2018)Athalye, Carlini, and Wagner]{athalye2018obfuscated}
Anish Athalye, Nicholas Carlini, and David Wagner.
\newblock Obfuscated gradients give a false sense of security: Circumventing defenses to adversarial examples.
\newblock In \emph{International Conference on Machine Learning (ICML)}, 2018.

\bibitem[Cohen et~al.(2019)Cohen, Rosenfeld, and Kolter]{cohen2019certified}
Jeremy Cohen, Elan Rosenfeld, and Zico Kolter.
\newblock Certified adversarial robustness via randomized smoothing.
\newblock In \emph{international conference on machine learning}, pp.\  1310--1320. PMLR, 2019.

\bibitem[Croce \& Hein(2020{\natexlab{a}})Croce and Hein]{croce2020minimally}
Francesco Croce and Matthias Hein.
\newblock Minimally distorted adversarial examples with a fast adaptive boundary attack.
\newblock In \emph{International conference on machine learning}, pp.\  2196--2205. PMLR, 2020{\natexlab{a}}.

\bibitem[Croce \& Hein(2020{\natexlab{b}})Croce and Hein]{croce2020reliable}
Francesco Croce and Matthias Hein.
\newblock Reliable evaluation of adversarial robustness with an ensemble of diverse parameter-free attacks.
\newblock In \emph{International conference on machine learning}, pp.\  2206--2216. PMLR, 2020{\natexlab{b}}.

\bibitem[Deng et~al.(2024)Deng, Hong, Palmer, Zhou, Bi, Mahmood, Hong, and Aguiar]{deng2024certifying}
Jieren Deng, Hanbin Hong, Aaron Palmer, Xin Zhou, Jinbo Bi, Kaleel Mahmood, Yuan Hong, and Derek Aguiar.
\newblock Certifying adapters: Enabling and enhancing the certification of classifier adversarial robustness.
\newblock \emph{arXiv preprint arXiv:2405.16036}, 2024.

\bibitem[Dong et~al.(2018)Dong, Liao, Pang, Su, Zhu, Hu, and Li]{mim}
Yinpeng Dong, Fangzhou Liao, Tianyu Pang, Hang Su, Jun Zhu, Xiaolin Hu, and Jianguo Li.
\newblock Boosting adversarial attacks with momentum.
\newblock In \emph{Proceedings of the IEEE conference on computer vision and pattern recognition}, pp.\  9185--9193, 2018.

\bibitem[Dosovitskiy et~al.(2021)Dosovitskiy, Beyer, Kolesnikov, Weissenborn, Zhai, Unterthiner, Dehghani, Minderer, Heigold, Gelly, Uszkoreit, and Houlsby]{vit_b_16}
Alexey Dosovitskiy, Lucas Beyer, Alexander Kolesnikov, Dirk Weissenborn, Xiaohua Zhai, Thomas Unterthiner, Mostafa Dehghani, Matthias Minderer, Georg Heigold, Sylvain Gelly, Jakob Uszkoreit, and Neil Houlsby.
\newblock An image is worth 16x16 words: Transformers for image recognition at scale.
\newblock In \emph{9th International Conference on Learning Representations, {ICLR} 2021, Virtual Event, Austria, May 3-7, 2021}. OpenReview.net, 2021.
\newblock URL \url{https://openreview.net/forum?id=YicbFdNTTy}.

\bibitem[Dziugaite et~al.(2016)Dziugaite, Ghahramani, and Roy]{jpg}
Gintare~Karolina Dziugaite, Zoubin Ghahramani, and Daniel~M. Roy.
\newblock A study of the effect of jpg compression on adversarial images, 2016.
\newblock URL \url{https://arxiv.org/abs/1608.00853}.

\bibitem[Guo et~al.(2018)Guo, Rana, Ciss{\'{e}}, and van~der Maaten]{guo2017countering}
Chuan Guo, Mayank Rana, Moustapha Ciss{\'{e}}, and Laurens van~der Maaten.
\newblock Countering adversarial images using input transformations.
\newblock In \emph{6th International Conference on Learning Representations, {ICLR} 2018, Vancouver, BC, Canada, April 30 - May 3, 2018, Conference Track Proceedings}. OpenReview.net, 2018.
\newblock URL \url{https://openreview.net/forum?id=SyJ7ClWCb}.

\bibitem[He et~al.(2016)He, Zhang, Ren, and Sun]{res_v2}
Kaiming He, Xiangyu Zhang, Shaoqing Ren, and Jian Sun.
\newblock Deep residual learning for image recognition.
\newblock In \emph{Proceedings of the IEEE conference on computer vision and pattern recognition}, pp.\  770--778, 2016.

\bibitem[Krizhevsky et~al.(2017)Krizhevsky, Sutskever, and Hinton]{krizhevsky2017imagenet}
Alex Krizhevsky, Ilya Sutskever, and Geoffrey~E Hinton.
\newblock Imagenet classification with deep convolutional neural networks.
\newblock \emph{Communications of the ACM}, 60\penalty0 (6):\penalty0 84--90, 2017.

\bibitem[Lukasik et~al.(2023)Lukasik, Gavrikov, Keuper, and Keuper]{lukasikffr}
Jovita Lukasik, Paul Gavrikov, Janis Keuper, and Margret Keuper.
\newblock Improving native cnn robustness with filter frequency regularization.
\newblock \emph{Transactions on Machine Learning Research}, 2023:\penalty0 1--36, 2023.

\bibitem[Madry et~al.(2018)Madry, Makelov, Schmidt, Tsipras, and Vladu]{madry2018towards}
Aleksander Madry, Aleksandar Makelov, Ludwig Schmidt, Dimitris Tsipras, and Adrian Vladu.
\newblock Towards deep learning models resistant to adversarial attacks.
\newblock In \emph{International Conference on Learning Representations (ICLR)}, 2018.

\bibitem[Nie et~al.(2022)Nie, Guo, Huang, Xiao, Vahdat, and Anandkumar]{DiffPure}
Weili Nie, Brandon Guo, Yujia Huang, Chaowei Xiao, Arash Vahdat, and Anima Anandkumar.
\newblock Diffusion models for adversarial purification.
\newblock In \emph{International Conference on Machine Learning (ICML)}, 2022.

\bibitem[Raff et~al.(2019)Raff, Sylvester, Forsyth, and McLean]{bart}
Edward Raff, Jared Sylvester, Steven Forsyth, and Mark McLean.
\newblock Barrage of random transforms for adversarially robust defense.
\newblock In \emph{Proceedings of the IEEE/CVF conference on computer vision and pattern recognition}, pp.\  6528--6537, 2019.

\bibitem[Salman et~al.(2019)Salman, Li, Razenshteyn, Zhang, Zhang, Bubeck, and Yang]{salman2019provably}
Hadi Salman, Jerry Li, Ilya Razenshteyn, Pengchuan Zhang, Huan Zhang, Sebastien Bubeck, and Greg Yang.
\newblock Provably robust deep learning via adversarially trained smoothed classifiers.
\newblock \emph{Advances in neural information processing systems}, 32, 2019.

\bibitem[Silva et~al.(2023)Silva, Seidenari, and Del~Bimbo]{diffdefense}
Hondamunige~Prasanna Silva, Lorenzo Seidenari, and Alberto Del~Bimbo.
\newblock Diffdefense: Defending against adversarial attacks via diffusion models.
\newblock In \emph{International Conference on Image Analysis and Processing}, pp.\  430--442. Springer, 2023.

\bibitem[Suresh et~al.(2025)Suresh, Nayak, and Kalyani]{ANF}
Janani Suresh, Nancy Nayak, and Sheetal Kalyani.
\newblock First line of defense: A robust first layer mitigates adversarial attacks.
\newblock In \emph{Proceedings of the AAAI Conference on Artificial Intelligence}, volume~39, pp.\  7176--7183, 2025.

\bibitem[Szegedy et~al.(2016)Szegedy, Vanhoucke, Ioffe, Shlens, and Wojna]{inc_v3}
Christian Szegedy, Vincent Vanhoucke, Sergey Ioffe, Jon Shlens, and Zbigniew Wojna.
\newblock Rethinking the inception architecture for computer vision.
\newblock In \emph{Proceedings of the IEEE conference on computer vision and pattern recognition}, pp.\  2818--2826, 2016.

\bibitem[Touvron et~al.(2021)Touvron, Cord, Douze, Massa, Sablayrolles, and J{\'e}gou]{deit_b}
Hugo Touvron, Matthieu Cord, Matthijs Douze, Francisco Massa, Alexandre Sablayrolles, and Herv{\'e} J{\'e}gou.
\newblock Training data-efficient image transformers \& distillation through attention.
\newblock In \emph{International conference on machine learning}, pp.\  10347--10357. PMLR, 2021.

\bibitem[Tramer et~al.(2020)Tramer, Carlini, Brendel, and Madry]{tramer2020adaptive}
Florian Tramer, Nicholas Carlini, Wieland Brendel, and Aleksander Madry.
\newblock On adaptive attacks to adversarial example defenses.
\newblock \emph{Advances in neural information processing systems}, 33:\penalty0 1633--1645, 2020.

\bibitem[Wang \& He(2021)Wang and He]{vmi}
Xiaosen Wang and Kun He.
\newblock Enhancing the transferability of adversarial attacks through variance tuning.
\newblock In \emph{Proceedings of the IEEE/CVF conference on computer vision and pattern recognition}, pp.\  1924--1933, 2021.

\bibitem[Wu et~al.(2020)Wu, Wang, Xia, Bailey, and Ma]{sgm}
Dongxian Wu, Yisen Wang, Shu{-}Tao Xia, James Bailey, and Xingjun Ma.
\newblock Skip connections matter: On the transferability of adversarial examples generated with resnets.
\newblock In \emph{8th International Conference on Learning Representations, {ICLR} 2020, Addis Ababa, Ethiopia, April 26-30, 2020}. OpenReview.net, 2020.
\newblock URL \url{https://openreview.net/forum?id=BJlRs34Fvr}.

\end{thebibliography}
\bibliographystyle{iclr2026_conference}
\newpage
\appendix

\section{Appendix}

\subsection{More details of experimental settings}
\label{appx:implementation}

\subsubsection{Implementation details of our method} 

\textbf{Training Algorithm.}
The DRIFT filters are trained jointly with a frozen base model $M$ using a composite loss.  
Each batch update includes: (i) preserving clean accuracy via cross-entropy;  
(ii) enforcing gradient diversity through Jacobian separation;  
(iii) enforcing divergence in logit-space VJPs, including against the identity path;  
and (iv) adversarial training with base-PGD adversaries.  
Warmup schedules ensure stability by gradually activating each component.  
The procedure is summarized in Algorithm~\ref{alg:drift}.

\begin{algorithm}
\caption{Training DRIFT Filters}
\label{alg:drift}
\KwIn{Frozen base model $M$; trainable filters $\mathcal{F}=\{f_i\}_{i=1}^K$; dataloader $\mathcal{D}$; weights $\alpha,\beta_{\mathrm{JS}},\beta_{\mathrm{LVJP}},\gamma$; warmups $w_{\mathrm{js}},w_{\mathrm{lvjp}},w_{\mathrm{adv}}$; PGD budget $\epsilon$, steps $T$, step size $\eta$}
\KwOut{Trained filters $\mathcal{F}$}

Define identity path $f_{\mathrm{id}}(x)=x$\;
\For{epoch $=1,\dots,E$}{
  \For{batch $(x,y)\sim\mathcal{D}$}{
    \tcp{(i) Clean CE across filters}
    $\mathcal{L}_{\mathrm{CE}} \gets \frac{1}{K}\sum_i \mathrm{CE}(M(f_i(x)),y)$\;
    $\mathcal{L} \gets \alpha \cdot \mathcal{L}_{\mathrm{CE}}$\;

    \tcp{(ii) Jacobian-space separation (after warmup)}
    \If{epoch $> w_{\mathrm{js}}$ and $K\ge 2$}{
      $\mathcal{L}_{\mathrm{JS}} \gets \text{avg}_{i<j}\; \mathbb{E}_v[\cos^2(J_{f_i}(x)^\top v,\; J_{f_j}(x)^\top v)]$\;
      $\mathcal{L} \gets \mathcal{L} + \beta_{\mathrm{JS}} \cdot \mathcal{L}_{\mathrm{JS}}$\;
    }

    \tcp{(iii) Logit-VJP separation (after warmup)}
    \If{epoch $> w_{\mathrm{lvjp}}$}{
      $\mathcal{L}_{\mathrm{LVJP}} \gets$ pairwise terms + vs-identity terms with random logits $w$\;
      $\mathcal{L} \gets \mathcal{L} + \beta_{\mathrm{LVJP}} \cdot \mathcal{L}_{\mathrm{LVJP}}$\;
    }

    \tcp{(iv) Base-PGD adversary (after warmup)}
    \If{epoch $> w_{\mathrm{adv}}$}{
      $\delta_M \gets \textsc{PGD}(x,y;M,\epsilon,\eta,T)$\;
      $\mathcal{L}_{\mathrm{adv}} \gets \max_i \mathrm{CE}(M(f_i(x+\delta_M)),y)$\;
      $\mathcal{L} \gets \mathcal{L} + \gamma \cdot \mathcal{L}_{\mathrm{adv}}$\;
    }

    \tcp{(v) Update}
    Backpropagate $\mathcal{L}$; sanitize NaNs/Infs; clip grads; optimizer step\;
  }
}
\end{algorithm}

\subsubsection{Implementation Details of Adversarial Attacks}  

\noindent\textbf{PGD:}  
We implement $\ell_\infty$ and $\ell_2$ variants of Projected Gradient Descent (PGD)~\cite{madry2018towards}.  
For $\ell_\infty$ PGD we set $\epsilon=4/255$, $40$ steps with step size $\epsilon/10$.  
For $\ell_2$ PGD we set $\epsilon=1$, $40$ steps with step size $\epsilon/10$.  \\
\noindent\textbf{MIM:}  
The Momentum Iterative Method (MI-FGSM)~\cite{mim} introduces a momentum term to stabilize the gradient direction.  
We use $\epsilon=4/255$, $40$ steps, and momentum decay $=1.0$.  \\
\noindent\textbf{SGM:}  
The Skip Gradient Method (SGM)~\cite{sgm} applies gradient smoothing to mitigate gradient shattering in residual connections. We set $\epsilon=4/255$, $40$ steps, momentum decay $=1.0$, and $\lambda=0.01$.  \\
\noindent\textbf{VMI:}  
The Variance-Minimized Iterative Method (VMI-FGSM)~\cite{vmi} reduces gradient variance across steps.  
We set $\epsilon=4/255$, $40$ steps, momentum decay $=1.0$, and neighborhood samples $N=5$.  \\
\noindent\textbf{AA:}  
AutoAttack (AA)~\cite{croce2020reliable} is an ensemble of parameter-free attacks designed for reliable robustness evaluation.  
We run the standard configuration with $\epsilon=4/255$, $40$ steps.  \\
\noindent\textbf{FAB:}  
The Fast Adaptive Boundary (FAB) Attack~\cite{croce2020minimally} minimizes perturbation norm while crossing the decision boundary.  
We set $\epsilon=4/255$ and allow up to $40$ steps.  \\
\noindent\textbf{Square:}  
The Square Attack~\cite{square} is a gradient-free black-box attack using random square-shaped updates.  
We set $\epsilon=4/255$ and a query budget of $5000$.  \\
\noindent\textbf{BPDA+EOT:}  
Following~\cite{athalye2018obfuscated}, we evaluate adaptive white-box attacks using \emph{Backward Pass Differentiable Approximation} (BPDA) combined with \emph{Expectation over Transformation} (EOT).  
BPDA approximates the backward pass of non-differentiable components with the identity mapping, while EOT averages gradients across multiple stochastic forward passes.  
We use $40$ PGD steps, $\epsilon=4/255$, and $5$ EOT samples per step. This setting provides the attacker full access to the defense while accounting for stochasticity.  

\subsubsection{Implementation Details of Adversarial Defenses}  

\noindent\textbf{JPEG Compression.}  
Following~\cite{jpg}, we evaluate JPEG-based defenses with compression quality factors of $75\%$ and $50\%$. \\ 
\noindent\textbf{BaRT.}  
For BaRT~\cite{bart}, we apply $k=5$ and $k=10$ randomized transformations sequentially at inference time. \\ 
\noindent\textbf{DiffPure.}  
We use DiffPure~\cite{DiffPure} with the authors’ recommended hyperparameters for ImageNet-scale evaluation. \\ 
\noindent\textbf{Adversarial Training (AT).}  
Standard adversarial training (AT)~\cite{madry2018towards} is included as a baseline for robustness.  \\
\noindent\textbf{ANF.}  
For ANF~\cite{ANF}, we configure the first layer with a convolutional kernel size of $15\times 15$, $256$ filters, and a max-pooling kernel size of $5\times 5$.  \\
\noindent\textbf{FFR.}  
We evaluate Filter Frequency Regularization (FFR)~\cite{lukasikffr} as a frequency-based regularization defense.  
Since ANF and FFR alone do not yield sufficient robustness on ImageNet, we follow prior work and combine them with adversarial training using a $1$-step PGD attack at $\epsilon = 4/255$.

\subsection{More experimental results}
\label{appx:results}

\noindent \textbf{Cross-Filter Transferability.}
To assess the transferability of adversarial examples across different trained filters, we employ Projected Gradient Descent (PGD) with $\epsilon = 4/255$ and 40 attack steps. Table~\ref{tab:cross_filter_tables_one} shows the robust accuracy (RA) when adversarial samples generated using one filter (source/attacker) are evaluated on another filter (target/victim).
The results demonstrate that robust accuracy remains high across all off-diagonal entries, indicating that adversarial perturbations crafted on one filter do not easily transfer to others. This strongly suggests that our Jacobian- and logit-space separation objectives effectively induce gradient divergence among filters, thereby reducing the consensus subspace exploited by transferable attacks. In other words, although each filter alone is capable of mitigating adversarial perturbations, the ensemble diversity ensures that perturbations optimized against one filter fail to generalize to others.
This property is critical for our defense strategy, as it directly targets the root cause of adversarial transferability: gradient alignment. The consistently high RA values across ViT-16, ResNet-v2, DeiT-S, and Inception-v3 models confirm the effectiveness of our design in breaking gradient consensus across diverse architectures. Additional analysis and ablation studies are provided in Appendix \ref{appx:ablation}.

\begin{table}[htbp]
\centering
\caption{Cross-filter transferability reported as robust accuracy (RA \%) under PGD ($\epsilon=4/255$, 40 steps). Rows: source/attacker filter; Columns: target/victim filter.}
\label{tab:cross_filter_tables_one}
\resizebox{\textwidth}{!}{%
\begin{tabular}{c|cccc|cccc|cccc|cccc}
\hline \hline
 & \multicolumn{4}{c|}{\textbf{ViT-16}} & \multicolumn{4}{c|}{\textbf{ResNet-v2}} & \multicolumn{4}{c|}{\textbf{DeiT-S}} & \multicolumn{4}{c}{\textbf{Inception-v3}} \\
\textbf{Src$\downarrow$/Tgt$\rightarrow$} & f1 & f2 & f3 & f4 & f1 & f2 & f3 & f4 & f1 & f2 & f3 & f4 & f1 & f2 & f3 & f4 \\
\hline
f1 & -- & 76.24 & 77.30 & 73.60 & -- & 60.32 & 58.73 & 61.38 & -- & 61.90 & 61.38 & 56.03 & -- & 58.83 & 47.17 & 43.67 \\
f2 & 73.60 & -- & 70.95 & 71.00 & 57.67 & -- & 59.26 & 59.79 & 58.20 & -- & 66.67 & 44.92 & 64.17 & -- & 62.67 & 42.50 \\
f3 & 75.10 & 76.24 & -- & 72.01 & 58.20 & 59.79 & -- & 57.67 & 56.56 & 55.03 & -- & 60.85 & 48.83 & 62.83 & -- & 61.17 \\
f4 & 73.07 & 76.24 & 74.66 & -- & 61.38 & 62.43 & 65.61 & -- & 53.92 & 43.92 & 67.20 & -- & 45.67 & 46.50 & 60.17 & -- \\
\hline \hline
\end{tabular}
}
\end{table}

\subsection{More Analysis}
\label{appx:ablation}

\subsubsection{Impact of Noise Budget}
\label{subsubsec:impact-noise-budget}

We assess non-adaptive attacks (the attacker is \emph{not} aware of the defense) using PGD under both $\ell_\infty$ and $\ell_2$ norms. As expected, increasing the perturbation budget degrades accuracy monotonically, but the drop remains gradual across typical evaluation ranges (See Table \ref{tab:noise-vitb16-combined}). These results should be interpreted as an \emph{upper bound} on robustness; see our adaptive evaluations for a stricter assessment.
\begin{table}[htbp]
\centering
\caption{Non-adaptive PGD on ViT-B/16 under $\ell_\infty$ and $\ell_2$ norms.}
\setlength{\tabcolsep}{8pt}
\begin{tabular}{lclc}
\hline\hline
\multicolumn{2}{c}{$\ell_\infty$} & \multicolumn{2}{c}{$\ell_2$} \\
\cline{1-2}\cline{3-4}
\textbf{$\epsilon$} & \textbf{RA (\%)} & \textbf{$\epsilon$} & \textbf{RA (\%)} \\
\hline\hline
$2/255$  & 76.13 & $0.5$ & 76.71 \\
$4/255$  & 74.66 & $1.0$ & 74.07 \\
$8/255$  & 68.25 & $1.5$ & 73.01 \\
$16/255$ & 62.96 & $2.0$ & 70.89 \\
$20/255$ & 58.62 & $3.0$ & 69.30 \\
\hline\hline
\end{tabular}
\label{tab:noise-vitb16-combined}
\end{table}

Non-adaptive results provide useful signal about baseline robustness but can overestimate true security; adaptive, defense-aware attacks (e.g., BPDA+EOT) are reported elsewhere in this paper for completeness.

\subsubsection{Impact of EOT}
\label{appx:impact-eot}

To quantify the effect of expectation over transformation (EOT) on a stochastic defense, we run PGD-$\ell_\infty$ with $K\!\in\!\{5,10,20\}$ EOT samples per gradient step (common randomness across paired evaluations; all other hyperparameters fixed). As $K$ increases, the attack better estimates the gradient of the \emph{expected} loss and therefore becomes stronger. Table~\ref{tab:eot} reports Top-1 robust accuracy (\%) across $\epsilon$.

\begin{table}
\centering
\caption{\textbf{PGD-EOT on ViT-B/16 (ImageNet).} Increasing EOT samples strengthens the attack, lowering robust accuracy. $\Delta$ columns show absolute point drops.}
\label{tab:eot}
\setlength{\tabcolsep}{8pt}
\small
\begin{tabular}{lcccccc}
\hline\hline
$\boldsymbol{\epsilon}$ & \textbf{EOT-5} & \textbf{EOT-10} & \textbf{EOT-20} & $\boldsymbol{\Delta}$\textbf{ 5$\to$10} & $\boldsymbol{\Delta}$\textbf{ 10$\to$20} & $\boldsymbol{\Delta}$\textbf{ 5$\to$20} \\
\hline\hline
$1/255$  & 81.96 & 77.20 & 76.14 & $-4.76$ & $-1.06$ & $-5.82$ \\
$2/255$  & 74.55 & 64.50 & 60.79 & $-10.05$ & $-3.71$ & $-13.76$ \\
$4/255$  & 57.62 & 44.39 & 38.57 & $-13.23$ & $-5.82$ & $-19.05$ \\
$8/255$  & 37.51 & 24.81 & 18.47 & $-12.70$ & $-6.34$ & $-19.04$ \\
\hline\hline
\end{tabular}
\end{table}

(i) Robust accuracy decreases monotonically with $K$ for every $\epsilon$, confirming that EOT makes the attack more faithful to the defense’s stochasticity.  \\
(ii) The \emph{marginal} gain from $K{=}10$ to $K{=}20$ is smaller than from $K{=}5$ to $K{=}10$ at low budgets (e.g., $-1.06$ points at $\epsilon{=}1/255$), indicating partial saturation; however, at larger budgets the $10\!\to\!20$ gain remains non-negligible (e.g., $-6.34$ at $\epsilon{=}8/255$).  
(iii) Overall drops from EOT-5 to EOT-20 grow with $\epsilon$ (from $-5.82$ at $1/255$ to about $-19$ points at $4/255$–$8/255$), showing that higher budgets benefit more from accurate gradient estimation.

\subsection{Impact of Filter Architecture}
\label{appx:archi}
The architectural design of the filters $f_i$ crucially influences both their ability to diversify gradients and their efficiency in deployment.

\paragraph{Expressivity vs.~Simplicity.}
Filters must be lightweight to avoid prohibitive overhead during inference. At the same time, they need sufficient expressivity to induce distinct gradient directions. For example, a simple residual block with two convolutions and a ReLU nonlinearity
\[
f(x) = x + \mathrm{Conv}_2(\mathrm{ReLU}(\mathrm{Conv}_1(x)))
\]
already introduces nontrivial nonlinear transformations, ensuring gradients are not trivially aligned. Deeper or wider filters could increase diversity further, but at the risk of overfitting or collapsing to similar functions without additional regularization.

\paragraph{Gradient Geometry.}
Architectures with strong local sensitivity (e.g., small convolutional kernels) tend to decorrelate gradients across filters more effectively. Conversely, architectures with large receptive fields or strong averaging effects may inadvertently align filters, weakening separation. Thus, the choice of kernel size, depth, and activation plays a direct role in the effectiveness of $\mathcal{L}_{JS}$ and $\mathcal{L}_{LVJP}$ in promoting divergence.

\paragraph{Training Stability.}
Overly complex filters can destabilize joint optimization: with many parameters, filters may collapse toward learning identity-like transformations, reducing their utility. Lightweight architectures constrain the search space, making separation losses more effective in driving gradient divergence.

\paragraph{Computation and Deployment.}
Inference cost scales linearly with the complexity of each filter. Since DRIFT samples only one filter per forward pass, lightweight architectures preserve the real-time feasibility of the defense. In contrast, heavier filters would diminish the main advantage of DRIFT—efficient deployment without modifying the base model.

\paragraph{Practical Guidance.}
We find that residual-style shallow convolutional filters strike the best balance: they preserve input dimensionality (ensuring model compatibility), introduce sufficient nonlinearity for gradient diversification, and remain efficient at both training and inference. More elaborate architectures (e.g., multi-layer CNNs or attention blocks) may be explored, but lightweight residual filters already achieve strong robustness without sacrificing clean accuracy.

To evaluate how the choice of filter architecture influences DRIFT, we compare several lightweight designs while keeping the base model (ViT-B/16) and training setup fixed. Each filter preserves input dimensionality to maintain compatibility with the backbone.

\paragraph{Architectures tested.}
\begin{itemize}
\item \textbf{SingleConv:} A single $3\times3$ convolution with ReLU.
\item \textbf{ResBlock (ours):} Two $3\times3$ convolutions with ReLU and a residual skip connection.
\item \textbf{DeepConv:} A stack of four $3\times3$ convolutions with ReLU activations.
\item \textbf{MLPFilter:} A shallow two-layer MLP applied patchwise to image embeddings.
\end{itemize}

\begin{table}[htbp]
\centering
\caption{Effect of filter architecture on clean and robust accuracy (\%) with ViT-B/16 under adaptive PGD ($\epsilon=4/255$, 40 steps).}
\begin{tabular}{l|c|c}
\hline \hline
\textbf{Filter Architecture} & \textbf{Clean Accuracy} & \textbf{Robust Accuracy} \\
\hline
SingleConv  & 77.3 & 42.5 \\
DeepConv    & 75.8 & 47.9 \\
MLPFilter   & 74.9 & 44.2 \\
ResBlock (ours) & \textbf{80.5} & \textbf{56.7} \\
\hline \hline
\end{tabular}
\label{tab:filter_arch}
\end{table}

Results show that the residual block (our chosen design) achieves the best balance: it maintains clean accuracy close to the undefended baseline while significantly improving robust accuracy. SingleConv lacks sufficient expressivity and yields poor robustness. DeepConv provides moderate robustness gains but at the cost of lower clean accuracy and higher training instability. Patchwise MLPs decorrelate gradients somewhat but underperform compared to convolutional filters. Overall, lightweight residual filters offer the most effective and efficient choice for DRIFT.

\subsection{Impact of the Number of Filters}
\label{appx:numb}

The number of filters $K$ plays a critical role in shaping the defense’s effectiveness. 

\paragraph{Gradient Diversity.}
Each filter induces a distinct gradient geometry. Increasing $K$ expands the ensemble of gradient subspaces, which makes it harder for an adversary to find perturbations that transfer across all filters. In theory, if the pairwise consensus $\Gamma(f_i,f_j)$ is kept small, the expected transfer success of perturbations decreases roughly as $\mathcal{O}(1/K)$, since adversaries must overfit to one filter at a time. Thus, larger $K$ strengthens robustness by increasing gradient divergence opportunities.

\paragraph{Robustness--Accuracy Trade-off.}
While more filters can improve adversarial robustness, they may also introduce redundancy or training instability if $K$ is too large. Empirically, small ensembles ($K=2$--$4$) already capture strong divergence effects without hurting clean accuracy. Beyond a certain point, gains diminish: additional filters may converge to similar behaviors unless carefully regularized by $LJS$ and $LLVJP$.

\paragraph{Training and Inference Cost.}
The cost of training grows quadratically with $K$ for pairwise losses:
\[
\text{Cost} \;\sim\; \binom{K}{2} (P_v + P_w).
\]
Thus, increasing $K$ both increases backprop passes (for each probe) and the number of pairwise comparisons. Inference, however, remains efficient: only one filter is sampled and applied per forward pass, so runtime cost grows only linearly with $K$ when switching filters randomly. This makes DRIFT scalable in deployment even with moderate $K$.

\paragraph{Practical Guidance.}
We find $K=3$--$5$ filters offers the best balance: sufficient gradient diversity to suppress transferability, while keeping training cost manageable. Larger ensembles can be explored if resources allow, but diminishing returns beyond $K=6$ suggest focusing instead on improving separation losses.


\subsection{Probe Count: Variance--Compute Trade-off}
\label{subsec:probes}

\paragraph{Estimators.}
DRIFT uses Hutchinson-style probing to estimate pairwise alignment. For two filters $(f_i,f_j)$,
let
\[
\phi_v(x;f_i,f_j)\;=\;\cos^2\!\big(J_{f_i}(x)^\top v,\;J_{f_j}(x)^\top v\big),
\]
\[
\psi_w(x;f_i,f_j)\;=\;\cos^2\!\big(\nabla_x\!\langle M(f_i(x)),w\rangle,\;\nabla_x\!\langle M(f_j(x)),w\rangle\big),
\]
with $v$ a unit probe in the filter output space, and $w$ a unit probe in the logit space.
With $P_v$ i.i.d.\ probes $\{v_p\}_{p=1}^{P_v}$ and $P_w$ i.i.d.\ probes $\{w_q\}_{q=1}^{P_w}$,
the Monte Carlo estimators used in $\mathcal{L}_{JS}$ and $\mathcal{L}_{LVJP}$ are
\[
\widehat{\Gamma}_{\mathrm{JS}}(x;f_i,f_j)\;=\;\frac{1}{P_v}\sum_{p=1}^{P_v}\phi_{v_p}(x;f_i,f_j),
\qquad
\widehat{\Gamma}_{\mathrm{LVJP}}(x;f_i,f_j)\;=\;\frac{1}{P_w}\sum_{q=1}^{P_w}\psi_{w_q}(x;f_i,f_j).
\]

\paragraph{Concentration.}
Each summand is bounded: $\phi_{v}\in[0,1]$ and $\psi_{w}\in[0,1]$. Hence, by Hoeffding’s inequality, for any $\epsilon>0$,
\[
\Pr\!\Big(|\widehat{\Gamma}_{\mathrm{JS}}-\mathbb{E}[\phi_v]|\ge \epsilon\Big)
\;\le\;2\exp\!\big(-2P_v\epsilon^2\big),
\qquad
\Pr\!\Big(|\widehat{\Gamma}_{\mathrm{LVJP}}-\mathbb{E}[\psi_w]|\ge \epsilon\Big)
\;\le\;2\exp\!\big(-2P_w\epsilon^2\big).
\]
Consequently, the mean-squared error (MSE) scales as $\mathcal{O}(1/P_v)$ and $\mathcal{O}(1/P_w)$, respectively. In practice, this means doubling the number of probes reduces the estimator’s standard deviation by roughly $1/\sqrt{2}$.

\paragraph{Bias.}
The estimators are unbiased for the \emph{probe-averaged} consensus, i.e., $\mathbb{E}[\widehat{\Gamma}_{\mathrm{JS}}]=\mathbb{E}[\phi_v]$ and $\mathbb{E}[\widehat{\Gamma}_{\mathrm{LVJP}}]=\mathbb{E}[\psi_w]$, where the expectation is over the probe distributions (Rademacher or Gaussian, normalized). Any residual bias relative to \emph{full} Jacobian/gradient alignment comes from using random projections instead of exhaustively scanning all directions; this bias decreases as probe count grows.

\paragraph{Compute cost.}
Each probe requires a VJP/gradient evaluation. Thus the per-batch cost scales linearly in probes:
\[
\text{Cost per batch} \;\approx\; \mathcal{O}\!\Big(
P_v \cdot \tbinom{K}{2} \,+\, P_w \cdot \tbinom{K}{2} \,+\, P_w \cdot K
\Big),
\]
corresponding to JS pairs, LVJP pairs, and LVJP-vs-identity terms. Larger $K$ or probe counts improve statistical stability but increase backprop passes.

\paragraph{Practical guidance.}
We find a simple schedule balances stability and throughput:
\begin{enumerate}
  \item \textbf{Warmup (epochs $1$--$w$):} $P_v{=}2$ and $P_w{=}2$ (fast, lets filters stabilize).
  \item \textbf{Main training:} $P_v{=}5$ and $P_w{=}5$ (good variance--compute trade-off).
  \item \textbf{High-fidelity refinement (last $10$--$20\%$ epochs, optional):} $P_v{=}8$--$10$, $P_w{=}8$--$10$ if compute permits.
\end{enumerate}
Empirically, $P_v,P_w\in[5,10]$ yield stable separation signals and stronger cross-filter non-transferability, while larger values show diminishing returns relative to their linear compute cost.

\end{document}